\begin{document}

\newcommand{\shr}[1]{\resizebox{!}{0.20cm}{\mbox{$#1$}}}
\newcommand{\sh}[1]{\resizebox{!}{0.25cm}{\mbox{$#1$}}}

\title{The Stochastic Gradient Descent for the Primal L1-SVM Optimization Revisited}
\author{Constantinos Panagiotakopoulos \and Petroula Tsampouka}
\institute{School of Technology, Aristotle University of Thessaloniki, Greece \\
\email{costapan@eng.auth.gr, petroula@auth.gr} }
\maketitle

\begin{abstract}
We reconsider the stochastic (sub)gradient approach to the unconstrained primal L1-SVM optimization. We observe that if the learning rate is inversely proportional to the number of steps, i.e., the number of times any training pattern is presented to the algorithm, the update rule may be transformed into the one of the classical perceptron with margin in which the margin threshold increases linearly with the number of steps. Moreover, if we cycle repeatedly through the possibly randomly permuted training set the dual variables defined naturally via the expansion of the weight vector as a linear combination of the patterns on which margin errors were made are shown to obey at the end of each complete cycle automatically the box constraints arising in dual optimization. This renders the dual Lagrangian a running lower bound on the primal objective tending to it at the optimum and makes available an upper bound on the relative accuracy achieved which provides a meaningful stopping criterion. In addition, we propose a mechanism of presenting the same pattern repeatedly to the algorithm which maintains the above properties. Finally, we give experimental evidence that algorithms constructed along these lines exhibit a considerably improved performance.
\end{abstract}
\renewcommand{\vec}[1]{\mbox{\boldmath$#1$}}
\newcommand{\Tiny}[1]{\mbox{\tiny$#1$}}
\section{Introduction}
Support Vector Machines (SVMs) \cite{Bos,Vap,CST} have been extensively used as linear classifiers either in the space where the patterns originally reside or in high dimensional feature spaces induced by kernels. They appear to be very successful at addressing the classification problem expressed as the minimization of an objective function involving the empirical risk while at the same time keeping low the complexity of the classifier. As measures of the empirical risk various quantities have been proposed with the 1- and 2-norm loss functions being the most widely accepted ones giving rise to the optimization problems known as L1- and L2-SVMs \cite{Cortes}. SVMs typically treat the problem as a constrained quadratic optimization in the dual space. At the early stages of SVM development their efficient implementation was hindered by the quadratic dependence of their memory requirements on the number of training examples a fact which rendered prohibitive the processing of large datasets. The idea of applying optimization only to a subset of the training set in order to overcome this difficulty resulted in the development of decomposition methods \cite{Plat,Joa}. Although such methods led to improved convergence rates, in practice their superlinear dependence on the number of examples, which can be even cubic, can still lead to excessive runtimes when dealing with massive datasets. Recently, the so-called linear SVMs \cite{Joa06,FS,HCL,PT2} taking advantage of linear kernels in order to allow parts of them to be written in primal notation succeeded in outperforming decomposition SVMs.

The above considerations motivated research in alternative algorithms naturally formulated in primal space long before the advent of linear SVMs mostly in connection with the large margin classification of linearly separable datasets a problem directly related to the L2-SVM. Indeed, in the case that the 2-norm loss takes the place of the empirical risk an equivalent formulation exists which renders the dataset linearly separable in a high dimensional feature space. Such alternative algorithms (\cite{PT1,PT3} and references therein) are mostly based on the perceptron \cite{Ros}, the simplest online learning algorithm for binary linear classification, with their key characteristic being that they work in the primal space in an online manner, i.e., processing one example at a time. Cycling repeatedly through the patterns they update their internal state stored in the weight vector each time an appropriate condition is satisfied. This way, due to their ability to process one example at a time, such algorithms succeed in sparing time and memory resources and consequently become able to handle large datasets.

Since the L1-SVM problem is not known to admit an equivalent maximum margin interpretation via a mapping to an appropriate space fully primal large margin perceptron-like algorithms appear unable to deal with such a task.\footnote{The Margin Perceptron with Unlearning (MPU) \cite{PT2} addresses the L1-SVM problem by keeping track of the number of updates caused by each pattern in parallel with the weight vector which is updated according to a perceptron-like rule. In that sense MPU uses dual variables and should rather be considered a linear SVM which, however, possesses a finite time bound for achieving a predefined relative accuracy.} Nevertheless, a somewhat different approach giving rise to online algorithms was developed  which focuses on the minimization of the regularized 1-norm soft margin loss through stochastic gradient descent (SGD). Notable representatives of this approach are the pioneer NORMA \cite{KSW} (see also \cite{Zh}) and Pegasos \cite{SSS,SSS1} (see also \cite{Bor,Bot}). SGD gives rise to a kind of perceptron-like update having as an important ingredient the ``shrinking" of the current weight vector. Shrinking always takes place when a pattern is presented to the algorithm with it being the only modification suffered by the weight vector if no loss is incurred. Thus, due to lack of a meaningful stopping criterion the algorithm without user intervention keeps running forever. In that sense the algorithms in question are fundamentally different from the mistake-driven large margin perceptron-like classifiers which terminate after a finite number of updates. There is no proof even for their asymptotic convergence when they use as output the final hypothesis but they do exist probabilistic convergence results or results in terms of the average hypothesis. 

In the present work we reconsider the straightforward version of SGD for the primal unconstrained L1-SVM problem assuming a learning rate inversely proportional to the number of steps. Therefore, such an algorithm can be regarded either as NORMA with a specific dependence of the learning rate on the number of steps or as Pegasos with no projection step in the update and with a single example contributing to the (sub)gradient ($k=1$). We observe here that this algorithm may be transformed into a classical perceptron with margin \cite{DH} in which the margin threshold increases linearly with the number of steps. The obvious gain from this observation is that the shrinking of the weight vector at each step amounts to nothing but an increase of the step counter by one unit instead of the costly multiplication of all the components of the generally non-sparse weight vector with a scalar. Another benefit arising from the above simplified description is that we are able to demonstrate easily that if we cycle through the data in complete epochs the dual variables defined naturally via the expansion of the weight vector as a linear combination of the patterns on which margin errors were made satisfy automatically the box constraints of the dual optimization. An important consequence of this unexpected result is that the relevant dual Lagrangian which is expressed in terms of the total number of margin errors, the number of complete epochs and the length of the current weight vector provides during the run a lower bound on the primal objective function and gives us a measure of the progress made in the optimization process. Indeed, by virtue of the strong duality theorem the dual Lagrangian and the primal objective coincide at optimality. Therefore, assuming convergence to the optimum an upper bound on the relative accuracy involving the dual Lagrangian may be defined which offers a useful and practically achievable stopping criterion. Moreover, we may now provide evidence in favor of the asymptotic convergence to the optimum by testing experimentally the vanishing of the duality gap. Finally, aiming at performing more updates at the expense of only one costly inner product calculation we propose a mechanism of presenting the same pattern repeatedly to the algorithm consistently with the above interesting properties.

The paper is organized as follows. Section 2 describes the algorithm and its properties. In Section 3 we give implementational details and deliver our experimental results. Finally, Section 4 contains our conclusions.  
  
\section{The Algorithm and its Properties} 
Assume we are given a training set $\{(\vec x_k, l_k)\}^m_{k=1}$, with vectors $\vec x_k\in \bbbr^d$ and labels $l_k \in \{+1,-1\}$. This set may be either the original dataset or the result of a mapping into a feature space of higher dimensionality \cite{Vap,CST}. By placing $\vec x_k$ in the same position at a distance $\rho$ in an additional dimension, i.e., by extending $\vec x_k$ to $[\vec x_k, \rho]$, we construct an embedding of our data into the so-called augmented space \cite{DH}. The advantage of this embedding is that the linear hypothesis in the augmented space becomes homogeneous. Following the augmentation, a reflection with respect to the origin of the negatively labeled patterns is performed allowing for a uniform treatment of both categories of patterns. We define $R\equiv\displaystyle \max_{k} \left\| \vec{y}_{k} \right\|$ with $\vec{y}_{k} \equiv [l_k\vec x_k, l_k\rho]$ the $k$-th augmented and reflected pattern.

Let us consider the regularized empirical risk 
\[
\frac{\lambda}{2}\left\|\vec w\right\|^2+\frac{1}{m} \sum_{k=1}^m \max \{0,1-\vec w \cdot \vec y_k\}
\]
involving the 1-norm soft margin loss $\max \{0,1-\vec w \cdot \vec y_k\}$ for the pattern $\vec{y}_{k}$ and the regularization parameter $\lambda>0$ controlling the complexity of the classifier $\vec{w}$. For a given dataset of size $m$ minimization of the regularized empirical risk with respect to $\vec w$ is equivalent to the minimization of the objective function
\[
{\cal J}(\vec w, C)\equiv \frac{1}{2}\left\|\vec w\right\|^2+C \sum_{k=1}^m \max \{0,1-\vec w \cdot \vec y_k\} \enspace,
\]
where the ``penalty'' parameter $C>0$ is related to $\lambda$ as
\[
C=\frac{1}{\lambda m}\enspace.
\]
This is the L1-SVM problem expressed as an unconstrained optimization.

The algorithms we are concerned with are classical SGD algorithms. The term stochastic refers to the fact that they perform gradient descent with respect to the objective function in which the empirical risk $({1}/{m}) \sum_{k=1}^m \max \{0,1-\vec w \cdot \vec y_k\}$ is approximated by the instantaneous risk $ \max \{0,1-\vec w \cdot \vec y_k\}$ on a single example. The general form of the update rule is then
\[
\vec w_{t+1}= \vec w_{t}-\eta_t \nabla_{\vec w_t}\left[\frac{1}{2}\left\|\vec w_t\right\|^2+\frac{1}{\lambda}\max \{0,1-\vec w_t \cdot \vec y_k\}\right] \enspace,
\]
where $\eta_t$ is the learning rate and $\nabla_{\vec w_t}$ stands for a subgradient with respect to $\vec w_{t}$ since the 1-norm soft margin loss is only piecewise differentiable ($t \ge 0$). We choose a learning rate $\eta_t=1/(t+1)$ which satisfies the conditions $\sum^{\infty}_{t=0}\eta^2_t<\infty$ and $\sum^{\infty}_{t=0}\eta_t=\infty$ usually imposed in the convergence analysis of stochastic approximations. Then, noticing that $\vec w_{t}-\frac{1}{t+1}\vec w_{t}=\frac{t}{t+1}\vec w_{t}$, we obtain the update
\begin{equation}
\label{ubd1}
\vec w_{t+1}= \frac{t}{t+1}\vec w_{t}+\frac{1}{\lambda(t+1)}\vec y_k
\end{equation}
whenever
\begin{equation}
\label{cond1}
\vec w_t \cdot \vec y_k \le 1
\end{equation}
and
\begin{equation}
\label{upd2}
\vec w_{t+1}= \frac{t}{t+1}\vec w_t
\end{equation}
otherwise. In deriving the above update rule we made the choice $\vec w_t-\lambda^{-1}\vec y_k$ for the subgradient at the point $\vec w_t \cdot \vec y_k=1$ where the 1-norm soft margin loss is not differentiable. We assume that $\vec w_0=\vec0$. We see that if $\vec w_t \cdot \vec y_k > 1$ the update consists of a pure shrinking of the current weight vector by the factor $t/(t+1)$.

The update rule may be simplified considerably if we perform the change of variable
\begin{equation}
\label{var}
\vec w_t=\frac{\vec a_t}{\lambda t}
\end{equation}
for $t>0$ and $\vec w_0 =\vec a_0=\vec0$ for $t=0$. In terms of the new weight vector $\vec a_t$ the update rule becomes
\begin{equation}
\label{upd3}
\vec a_{t+1}= \vec a_t+\vec y_k
\end{equation}
whenever
\begin{equation}
\label{cond2}
\vec a_t \cdot \vec y_k \le \lambda t
\end{equation}
and
\begin{equation}
\label{upd4}
\vec a_{t+1}= \vec a_t
\end{equation}
otherwise.\footnote{For $t=0$ (\ref{cond2}) becomes $\vec a_0 \cdot \vec y_k \le 0$ instead of $\vec a_0 \cdot \vec y_k \le 1$ which is obtained from (\ref{cond1}) with $\vec w_0 =\vec a_0$. Since both are satisfied with $\vec a_0=\vec 0$ (\ref{cond2}) may be used for all $t$.}  This is the update of the classical perceptron algorithm with margin in which, however, the margin threshold in condition (\ref{cond2}) increases linearly with the number of presentations of patterns to the algorithm independent of whether they lead to a change in the weight vector $\vec a_{t}$. Thus, $t$ counts the number of times any pattern is presented to the algorithm which corresponds to the number of updates (including the pure shrinkings (\ref{upd2})) of the weight vector $\vec w_{t}$. Instead, the weight vector $\vec a_{t}$ is updated only if (\ref{cond2}) is satisfied meaning that a margin error is made on $\vec y_k$.

In the original formulation of Pegasos \cite{SSS} the update is completed with a projection step in order to enforce the bound $\left\|\vec{w}_t\right\| \le 1/\sqrt{\lambda}$ which holds for the optimal solution. We show now that this is dynamically achieved to any desired accuracy after the elapse of sufficient time. In practice, however, it is in almost all cases achieved after one pass over the data. 

\begin{proposition} For $t>0$ the norm of the weight vector $\vec w_t$ is bounded from above as follows
\begin{equation}
\label{bound}
\left\|\vec{w}_t\right\|\le \frac{1}{\sqrt{\lambda}}\sqrt{1+\left(\frac{R^2}{\lambda}-1\right)\frac{1}{t}}\enspace.
\end{equation}
\end{proposition}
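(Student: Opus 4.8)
The plan is to bound $\|\vec{w}_t\|^2$ through a recursion obtained directly from the update rule (\ref{ubd1})--(\ref{upd2}) and then to telescope. First I would treat the two cases separately. When a margin error occurs, i.e.\ condition (\ref{cond1}) holds and (\ref{ubd1}) applies, I expand
\[
\|\vec{w}_{t+1}\|^2 = \frac{t^2}{(t+1)^2}\|\vec{w}_t\|^2 + \frac{2t}{\lambda(t+1)^2}\,\vec{w}_t\cdot\vec{y}_k + \frac{1}{\lambda^2(t+1)^2}\|\vec{y}_k\|^2
\]
and invoke the margin-error bound $\vec{w}_t\cdot\vec{y}_k \le 1$ from (\ref{cond1}) together with $\|\vec{y}_k\| \le R$ to control the cross term and the last term, respectively. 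When only a shrinking takes place, i.e.\ (\ref{upd2}) applies, the identity $\|\vec{w}_{t+1}\|^2 = t^2(t+1)^{-2}\|\vec{w}_t\|^2$ clearly satisfies the very same estimate, since the two extra nonnegative contributions are merely absent. Hence in both cases
\[
(t+1)^2\|\vec{w}_{t+1}\|^2 \le t^2\|\vec{w}_t\|^2 + \frac{2t}{\lambda} + \frac{R^2}{\lambda^2} \enspace.
\]

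The factor $(t+1)^2$ multiplying the left-hand side suggests introducing $c_t \equiv t^2\|\vec{w}_t\|^2$, whereupon the inequality becomes the clean linear recursion $c_{t+1} \le c_t + 2t/\lambda + R^2/\lambda^2$ with initial value $c_0 = 0$ guaranteed by $\vec{w}_0 = \vec{0}$. Summing this telescoping inequality from step $0$ to step $t-1$ and using $\sum_{s=0}^{t-1} s = t(t-1)/2$, I obtain
\[
c_t \le \frac{t(t-1)}{\lambda} + \frac{R^2 t}{\lambda^2} \enspace.
\]
Dividing by $t^2$, casting the right-hand side into the form $\lambda^{-1}\bigl(1 + (R^2/\lambda - 1)/t\bigr)$, and taking square roots then yields precisely the claimed bound (\ref{bound}).

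I do not anticipate any genuine difficulty; the argument is a routine telescoping estimate. The single point needing care is the verification that the pure-shrinking step (\ref{upd2}) never violates the per-step inequality, which is immediate since discarding the two positive terms originating from $\vec{y}_k$ can only lower the bound, so that one recursion covers every step irrespective of whether a weight update or a mere shrinking occurs. It is also worth remarking that the bound tends to $1/\sqrt{\lambda}$ as $t\to\infty$, thereby recovering asymptotically the Pegasos constraint $\|\vec{w}_t\|\le 1/\sqrt{\lambda}$, while for $R^2\le\lambda$ it already lies below $1/\sqrt{\lambda}$ for every $t>0$, in agreement with the discussion preceding the proposition.
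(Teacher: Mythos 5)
Your proof is correct and is essentially the paper's own argument in disguise: the auxiliary quantity $c_t = t^2\left\|\vec{w}_t\right\|^2$ is exactly $\left\|\vec{a}_t\right\|^2/\lambda^2$ under the change of variable (\ref{var}), and your per-step recursion $c_{t+1} \le c_t + 2t/\lambda + R^2/\lambda^2$ is the paper's inequality $\left\|\vec{a}_{t+1}\right\|^2 - \left\|\vec{a}_t\right\|^2 \le R^2 + 2\lambda t$ rescaled by $\lambda^{-2}$, telescoped in the same way with the same ingredients (margin condition plus $\left\|\vec{y}_k\right\| \le R$, trivial validity of the estimate at pure-shrinking steps). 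The only difference is presentational: the paper telescopes directly in the already-introduced variable $\vec{a}_t$ via (\ref{upd3})--(\ref{upd4}), whereas you rederive the same rescaling from the raw $\vec{w}_t$ update (\ref{ubd1})--(\ref{upd2}).
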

\begin{wrapfigure}{l}{0.55\textwidth}
\vspace{-0pt}
\epsfig{file=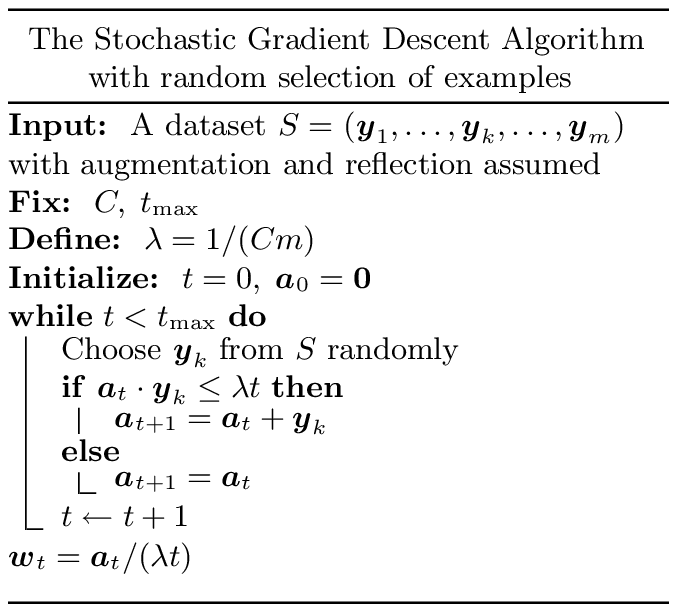, width=0.55\textwidth}
\vspace{-10pt}
\end{wrapfigure}
\begin{proof}
From the update rule (\ref{upd3}) taking into account condition (\ref{cond2}) under which the update takes place we get
\[
\left\|\vec{a}_{t+1}\right\|^2- \left\|\vec{a}_{t}\right\|^2 = \left \|\vec{y}_{k}\right \|^2
+2\vec{a}_t \cdot \vec{y}_{k} \le R^2+2\lambda t \enspace.
\]
Obviously, this is trivially satisfied if (\ref{cond2}) is violated and (\ref{upd4}) holds. A repeated application of the above inequality with $\vec a_0=\vec 0$ gives
\[
\left\|\vec{a}_{t}\right\|^2 \le R^2t+2\lambda \sum_{k=0}^{t-1}k=R^2t+\lambda t(t-1)=(R^2-\lambda)t+\lambda t^2
\]
from where using (\ref{var}) and taking the square root we obtain (\ref{bound}). \qed
\end{proof}

Combining (\ref{bound}) with the initial choice $\vec w_0=\vec0$ we see that for all $t$ the weaker bound $\left\|\vec{w}_t\right\| \le R/\lambda$ previously derived in \cite{SSS1} holds.

SGD gives naturally rise to online algorithms. Therefore, we may choose the examples to be presented to the algorithm at random. However, the L1-SVM optimization task is a batch learning problem which may be better tackled by online algorithms via the classical conversion of such algorithms to the batch setting. This is done by cycling repeatedly through the possibly randomly permuted training dataset and using the last hypothesis for prediction. This traditional procedure of presenting the training data to the algorithm in complete epochs has in our case, as we will see shortly, the additional advantage that there exists a lower bound on the optimal value of the objective function to be minimized which is expressed in terms of quantities available during the run. The existence of such a lower bound provides an estimate of the relative accuracy achieved by the algorithm. 

\begin{proposition}
\label{prop2}
Let us assume that at some stage the whole training set has been presented to the algorithm exactly $T$ times. Then, it holds that  
\begin{equation}
\label{lbound}
{\cal J}_{\rm opt}(C)\equiv \min_{\Tiny{\vec w}}{\cal J}(\vec w, C) \ge {\cal L}^T \equiv C \frac{M}{T}-\frac{1}{2} \left\|\vec{w}^T\right\|^2 \enspace,
\end{equation}
where $M$ is the total number of margin errors made up to that stage and $\vec{w}^T\equiv \vec w_{(mT)}$ the weight vector at $t=mT$ with $m$ being the size of the training set.
\end{proposition}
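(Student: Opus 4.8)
The plan is to exhibit explicit dual variables built from quantities the algorithm already tracks, verify that they satisfy the box constraints of the L1-SVM dual, and then invoke weak duality. First I would write down the dual of the unconstrained primal ${\cal J}(\vec w, C)$. Re-expressing it in constrained form through slack variables $\xi_k$ subject to $\vec w \cdot \vec y_k \ge 1-\xi_k$ and $\xi_k \ge 0$, the associated Wolfe dual is $\max_{\alpha}\left(\sum_k \alpha_k - \frac{1}{2}\left\|\sum_k \alpha_k \vec y_k\right\|^2\right)$ subject to $0 \le \alpha_k \le C$, with the stationarity relation $\vec w = \sum_k \alpha_k \vec y_k$. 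By weak duality, the value of this dual objective at \emph{any} feasible point $\{\alpha_k\}$ is a lower bound on the optimal primal value ${\cal J}_{\rm opt}(C)$, so it suffices to produce one feasible point whose dual value equals ${\cal L}^T$.

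Next I would read off the candidate dual variables from the dynamics of the algorithm. Since update (\ref{upd3}) adds $\vec y_k$ to $\vec a_t$ precisely when a margin error is made on $\vec y_k$, after $t=mT$ presentations we have $\vec a_{(mT)}=\sum_k n_k \vec y_k$, where $n_k$ denotes the number of margin errors caused by the $k$-th pattern. Applying the change of variable (\ref{var}) gives $\vec w^T=\vec a_{(mT)}/(\lambda m T)=\sum_k \alpha_k \vec y_k$ with $\alpha_k \equiv n_k/(\lambda m T)$, which already has the form demanded by the stationarity relation. Recalling $C=1/(\lambda m)$, this simplifies to $\alpha_k = C\, n_k/T$, and these are the natural dual variables I would propose.

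The crucial step is checking feasibility. Nonnegativity $\alpha_k \ge 0$ is immediate since $n_k \ge 0$. The upper bound $\alpha_k \le C$ is equivalent to $n_k \le T$, and this is exactly where the complete-epoch structure enters: after the whole set has been presented $T$ times, each individual pattern has been offered to the algorithm exactly $T$ times, and each presentation can trigger at most one update of $\vec a_t$, whence $n_k \le T$. I expect this to be the conceptual crux of the argument — it is what makes the running dual variables admissible, and it is precisely the property that would break down under an online schedule presenting patterns an unequal number of times.

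Finally, I would evaluate the dual objective at this feasible point. The linear term gives $\sum_k \alpha_k = (C/T)\sum_k n_k = C\,M/T$, using $M=\sum_k n_k$, while the quadratic term is $\frac{1}{2}\left\|\sum_k \alpha_k \vec y_k\right\|^2 = \frac{1}{2}\left\|\vec w^T\right\|^2$ by the identification above. Hence the dual objective at $\{\alpha_k\}$ equals $C\,M/T-\frac{1}{2}\left\|\vec w^T\right\|^2={\cal L}^T$, and weak duality then yields ${\cal J}_{\rm opt}(C)\ge {\cal L}^T$, which is (\ref{lbound}). No delicate estimate is needed beyond the counting argument $n_k \le T$; the rest is bookkeeping that translates the algorithm's state into a certifiably feasible dual solution.
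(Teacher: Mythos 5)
Your proposal is correct and follows essentially the same route as the paper's own proof: you define the dual variables $\alpha_k = C\,n_k/T$ from the margin-error counts (the paper's $I_k^{(mT)}$), verify the box constraints $0\le\alpha_k\le C$ via the same counting argument that each pattern is presented exactly $T$ times, and conclude by evaluating the dual Lagrangian at this feasible point and invoking weak duality. The only cosmetic difference is that you present the Wolfe dual derivation explicitly, while the paper cites the dual Lagrangian and weak duality directly.
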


\begin{proof}
Let $I_k^t$ denote the number of margin errors made on the pattern  $\vec y_k$ up to time $t$ such that $\vec a_t=\sum_{k}I_k^t\vec y_k$. Obviously, it holds that
\begin{equation}
\label{dv1}
0 \le I_k^{(mT)} \le T
\end{equation}
since $\vec y_k$ up to time $t=mT$ has been presented to the algorithm exactly $T$ times. Then, taking into account (\ref{var}) we see that at time $t$ the dual variable $\alpha_k^t$ associated with $\vec y_k$ is $\alpha_k^t=I_k^t/(\lambda t)$ and consequently the dual variable $\alpha_k^{(mT)}$ after $T$ complete epochs is given by
\begin{equation}
\label{dv2}
\alpha_k^{(mT)}=\frac{I_k^{(mT)}}{\lambda mT}=C\frac{I_k^{(mT)}}{T} \enspace. 
\end{equation}
With use of (\ref{dv1}) we readily conclude that the dual variables after $T$ complete epochs automatically satisfy the box constraints
\begin{equation}
\label{box}
0 \le \alpha_k^{(mT)} \le C \enspace.
\end{equation}
From the weak duality theorem it follows that
\[
{\cal J}(\vec w, C)\ge{\cal L}(\vec \alpha)=\sum_k\alpha_k -\frac{1}{2}\sum_{i,j} \alpha_i \alpha_j \vec y_i \cdot \vec y_j \enspace,
\]
where ${\cal L}(\vec \alpha)$ is the dual Lagrangian\footnote{Maximization of ${\cal L}(\vec \alpha)$ subject to the constraints $0\le \alpha_k \le C$ is the dual of the primal L1-SVM problem expressed as a constrained minimization.} and the variables $\alpha_k$ obey the box constraints $0\le \alpha_k \le C$. Thus, setting $\alpha_k=\alpha^{(mT)}_k$ in the above inequality, noticing that $\sum_k\alpha_k^{(mT)}=$ $(C/T)\sum_k I_k^{(mT)}$ $=CM/T$ and substituting $\sum_{k}\alpha_k^{(mT)}\vec y_k$ with $\vec w^{T}$ we obtain ${\cal J}(\vec w, C)\ge{\cal L}^T$ which is equivalent to (\ref{lbound}). \qed
\end{proof}

In the course of proving Proposition \ref{prop2} we saw that although the algorithm is fully primal the dual variables $\alpha_k^t$ defined through the expansion $\vec w_t=\sum_{k}\alpha_k^t\vec y_k$ of the weight vector $\vec w_t$ as a linear combination of the patterns on which margin errors were made obey after $T$ complete epochs automatically the box constraints (\ref{box}) encountered in dual optimization.\footnote{We expect that the dual variables will also satisfy the box constraints in the limit $t \to \infty$ if the patterns presented to the algorithm are selected randomly with equal probability since asymptotically they will all be selected an equal number of times.} This surprising result allows us to construct the dual Lagrangian ${\cal L}^T$ which provides a lower bound on the optimal value ${\cal J}_{\rm opt}$ of the objective $\cal J$ and assuming ${\cal L}^T>0$ to obtain an upper bound ${\cal J}/{\cal L}^T-1$ on the relative accuracy ${\cal J}/{\cal J}_{\rm opt}-1$ achieved as the algorithm keeps running. Thus, we have for the first time a primal SGD algorithm which may use the relative accuracy as stopping criterion.\footnote{It is, of course, computationally expensive to evaluate at the end of each epoch the exact primal objective. Thus, an approximate calculation of the loss using the value that the weight vector had the last time each pattern was presented to the algorithm is preferable. This way we exploit the already computed inner product $\vec a_t \cdot \vec y_k$ which is needed in order to decide whether condition (\ref{cond2}) is satisfied. If this approximate calculation gives a value of the relative accuracy which is not larger than $f$ times the one set as stopping criterion we proceed to a proper calculation of the primal objective. The comparison coefficient $f$ is given empirically a value close to 1.} It is also worth noticing that ${\cal L}^T$ involves only the total number $M$ of margin errors and does not require that we keep the values of the individual dual variables during the run.

Although the automatic satisfaction of the box constraints by the dual variables is very important it is by no means sufficient to ensure vanishing of the duality gap and consequently convergence to the optimal solution. To demonstrate convergence to the optimum relying on dual optimization theory we must make sure that the Karush-Kuhn-Tucker (KKT) conditions \cite{Vap,CST} are satisfied. Their approximate satisfaction demands that the only patterns which have a substantial loss be the ones which have dual variables equal or at least extremely close to $C$ (bound support vectors) and moreover that the patterns which have zero loss and margin considerably larger than $1/\left\|\vec{w}^T\right\|$ should have vanishingly small dual variables. Patterns with margin very close to $1/\left\|\vec{w}^T\right\|$ may \begin{wrapfigure}{l}{0.55\textwidth}
\vspace{-0pt}
\epsfig{file=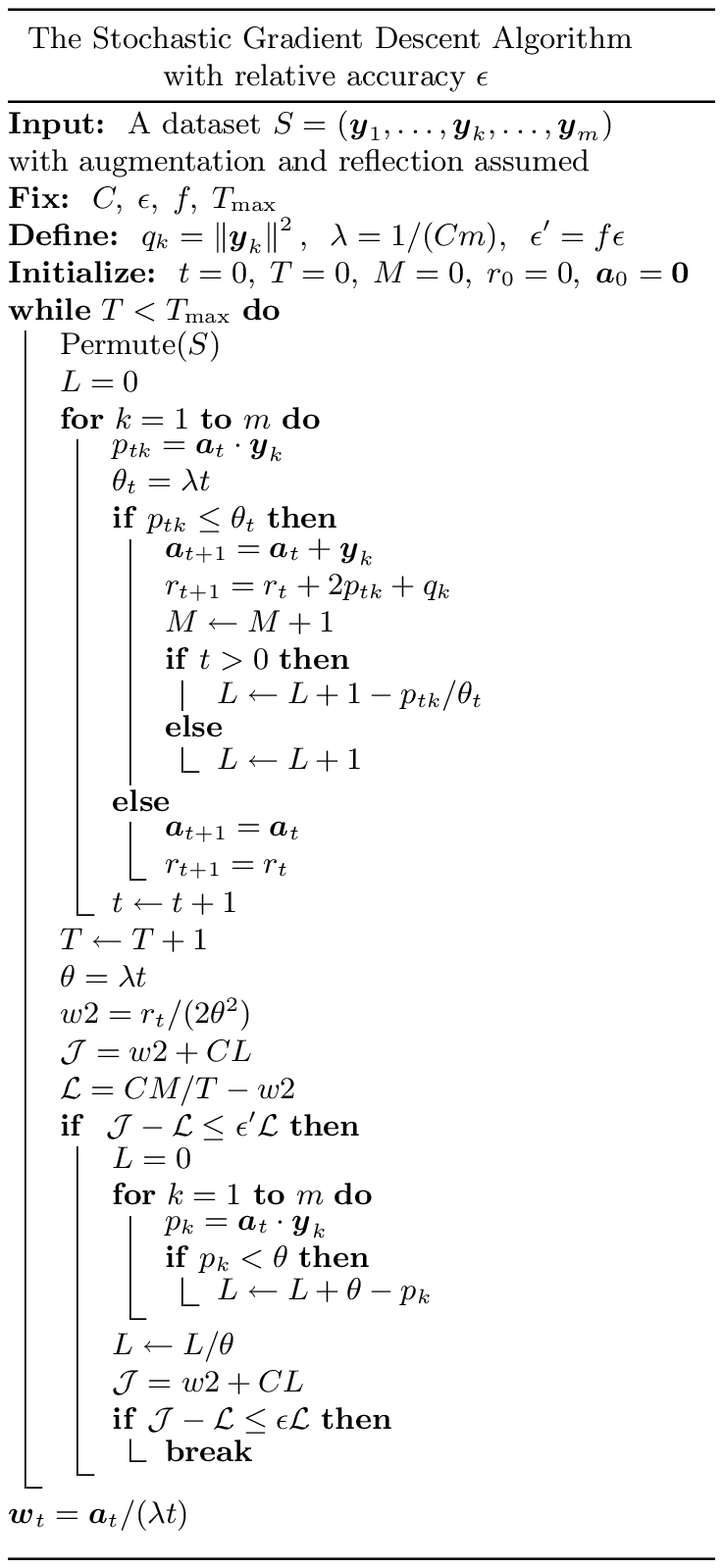, width=0.55\textwidth}
\vspace{-10pt}
\end{wrapfigure} have dual variables with values between 0 and $C$ and play the role of the non-bound support vectors. From (\ref{dv2}) we see that the dual variable associated with the $k$-th pattern is equal to $CT_k/T$ where $T_k \equiv I_k^{(mT)}$ is the number of epochs for which the $k$-th pattern was found to be a margin error. 
It is apparent that if there exists a number of epochs no matter how large it may be after which a pattern is consistently found to be a margin error then in the limit $T \to \infty$ we will have $(T_k/T) \to 1$ and the dual variable associated with it will asymptotically approach $C$. In contrast, if a pattern after a specific number of epochs is never found to be a margin error then $(T_k/T) \to 0$ and its dual variable will tend asymptotically to zero reflecting the accumulated effect of the shrinking that the weight vector suffers each time a pattern is presented to the algorithm. Therefore, the algorithm has the necessary ingredients for asymptotic satisfaction of the KKT conditions for the vanishing of the duality gap. The potential danger remains, however, that they may exist patterns with margin not very close to $1/\left\|\vec{w}^T\right\|$ which do not belong to any of the above categories and occasionally either become margin errors although most of the time are not or become classified with sufficiently large margin despite of the fact that they are most of the time margin errors. The hope is that with time the changes in the weight vector $\vec w_t$ will become smaller and smaller and such events will become more and more rare leading eventually to convergence to the optimal solution.

The above discussion cannot be regarded as a formal proof of the asymptotic convergence of the algorithm. We believe, however, that it does provide a convincing argument that assuming convergence (not necessarily to the optimum) the duality gap will eventually tend to zero and the lower bound ${\cal L}^T$ on the primal objective ${\cal J}$ given in Proposition \ref{prop2} will approach the optimal primal objective ${\cal J}_{\rm opt}$, thereby proving that convergence to the optimum has been achieved. If, instead, we make the stronger assumption of convergence to the optimum then, of course, the vanishing of the duality gap follows from the strong duality theorem. In any case the stopping criterion exploiting the upper bound ${\cal J}/{\cal L}^T-1$ on the relative accuracy ${\cal J}/{\cal J}_{\rm opt}-1$ is a meaningful one.

Our discussion so far assumes that in an epoch each pattern is presented only once to the algorithm. We may, however, consider the option of presenting the same pattern $\vec y_k$ repeatedly $\ell$ times to the algorithm\footnote{Multiple updates were introduced in \cite{PT2,PT1}. A discussion in a context related to the present work is given in \cite{KL}. However, a proper SGD treatment in the presence of a regularization term for the 1-norm soft margin loss was not provided. Instead, a ``forward-backward splitting'' approach was adopted in which a multiple update in the absence of the regularizer is followed by $\ell$ pure regularizer-induced $\vec w_t$ shrinkings.} aiming at performing more updates at the expense of only one calculation of the costly inner product $\vec a_t \cdot \vec y_k$. Proposition \ref{prop2} and the analysis following it will still be valid on the condition that all patterns in each epoch are presented exactly the same number $\ell$ of times to the algorithm. Then, such an epoch should be regarded as equivalent to $\ell$ usual epochs with single presentations of patterns to the algorithm and will have as a result the increase of $t$ by an amount equal to $m\ell$.

It is, of course, important to be able to decide in terms of just the initial value of $\vec a_t \cdot \vec y_k$ how many, let us say $\ell_+$, out of these $\ell$ consecutive presentations of the pattern $\vec y_k$ to the algorithm will lead to a margin error, i.e., to an update of $\vec a_t$, with each of the remaining $\ell_-=\ell-\ell_+$ presentations necessarily corresponding to just an increase of $t$ by 1 which amounts to a pure shrinking of $\vec w_t$.

\begin{proposition}
\label{propm}
Let the pattern $\vec y_k$ be presented at time $t$ repeatedly $\ell$ times to the algorithm. Also  let
\[
P=\vec a_t \cdot \vec y_k-\lambda t \enspace.
\]
Then, the number $\ell_+$ of times that $\vec y_k$ will be found to be a margin error is given by the following formula
\begin{eqnarray}
\label{mult}
&{\rm {if}} \; &P>(\ell-1)\lambda \ \ \ \ \ \  \ell_+=0 \nonumber \enspace,\\
&{\rm {if}} \; &P\le(\ell-1)\lambda \ \ \ \ \ \ \ell_+=\min \left \{\ell,\left [\frac{(\ell-1)\lambda-P}{\max \{\left \|\vec{y}_{k}\right \|^2, \lambda \}}\right ]+1\right \}  \enspace.
\end{eqnarray}
Here $[x]$ denotes the integer part of $x \ge 0$.
\end{proposition}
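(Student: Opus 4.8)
The plan is to watch two running quantities as the pattern $\vec y_k$ is fed to the algorithm for the $r$-th time ($r=1,\dots,\ell$): the number $p$ of margin errors already made on $\vec y_k$ and the threshold $\lambda(t+r-1)$ appearing in condition (\ref{cond2}). Each margin error adds $\vec y_k$ to $\vec a_t$ and hence raises $\vec a_t\cdot\vec y_k$ by $\|\vec y_k\|^2$, while \emph{every} presentation, update or not, raises the threshold by $\lambda$. Thus the $r$-th presentation is a margin error exactly when
\[
p\,\|\vec y_k\|^2 \le \lambda(r-1)-P \enspace .
\]
First I would dispose of the trivial branch: if $P>(\ell-1)\lambda$ then, since $p\ge 0$, the right-hand side stays negative for every $r\le\ell$, no update is ever triggered, $p$ never leaves $0$, and $\ell_+=0$.

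For the branch $P\le(\ell-1)\lambda$ the idea is to sandwich $\ell_+$ between two upper bounds and then show that the binding one, selected by the $\max$ in the denominator, is attained. Both upper bounds hold unconditionally. Writing the \emph{last} margin error at presentation $r^\ast\le\ell$ with prior count $\ell_+-1$ gives $(\ell_+-1)\|\vec y_k\|^2\le\lambda(r^\ast-1)-P\le\lambda(\ell-1)-P$, hence $\ell_+\le[(\lambda(\ell-1)-P)/\|\vec y_k\|^2]+1$. On the other hand, any update requires $\lambda(r-1)-P\ge0$, so only presentations with $r-1\ge P/\lambda$ can update, which gives $\ell_+\le[(\lambda(\ell-1)-P)/\lambda]+1$. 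Because $\lambda(\ell-1)-P\ge0$, the smaller of the two is the one carrying the larger denominator $\max\{\|\vec y_k\|^2,\lambda\}$, so $\ell_+\le\min\{\ell,[(\lambda(\ell-1)-P)/\max\{\|\vec y_k\|^2,\lambda\}]+1\}$.

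For the matching lower bound I would show that whenever $\ell_+<\ell$ the claimed value is actually reached. Let $r_0$ be the \emph{last} presentation that is a pure shrinking and $p_0$ the update count just before it; then the non-update condition reads $p_0\|\vec y_k\|^2>\lambda(r_0-1)-P$, all presentations after $r_0$ are updates, and $\ell_+=p_0+(\ell-r_0)$. When $\|\vec y_k\|^2\ge\lambda$ the telescoping estimate $\ell_+>(\lambda(r_0-1)-P)/\|\vec y_k\|^2+(\ell-r_0)\ge(\lambda(\ell-1)-P)/\|\vec y_k\|^2$ follows from $(\ell-r_0)(1-\lambda/\|\vec y_k\|^2)\ge0$, pinning $\ell_+$ to the $\|\vec y_k\|^2$-bound. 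When $\|\vec y_k\|^2<\lambda$ I would instead verify directly, by induction on $r$, that the presentations split into an initial run of pure shrinkings followed by uninterrupted updates; counting the length of that run, which is governed by $\lambda$, forces $p_0=0$ and reproduces the $\lambda$-bound. In either regime the attained value coincides with the binding upper bound, which is the assertion.

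The step I expect to be the main obstacle is the regime $\|\vec y_k\|^2\ge\lambda$. There updates and shrinkings need not be monotonically ordered: a pattern may cease to be a margin error and then, because $\lambda t$ keeps growing while $\vec a_t\cdot\vec y_k$ is frozen, become one again a few presentations later. Consequently one cannot simply declare that the first $\ell_+$ presentations are the updates, and the naive ``count until the inner product overtakes the threshold'' reasoning breaks down. The last-pure-shrinking telescoping identity is exactly what tames this interleaving and converts the non-monotone trajectory into the clean bound with denominator $\|\vec y_k\|^2$; checking that $1-\lambda/\|\vec y_k\|^2\ge0$ makes the inequality point the correct way is the heart of the argument.
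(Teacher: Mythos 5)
Your proof is correct, and it takes a genuinely different route from the paper's. The paper argues by exhaustive case analysis on the value of $P$: it verifies the formula separately for $P>(\ell-1)\lambda$, for $\left\|\vec y_k\right\|^2\le\lambda$ (where it splits $0<P\le(\ell-1)\lambda$ into subintervals $(\ell_1-1)\lambda<P\le\ell_1\lambda$ and reads off ``$\ell_1$ minus-steps then all plus-steps''), and for the hard regime $\left\|\vec y_k\right\|^2>\lambda$, where it partitions the range of $P$ into subintervals indexed by $\ell_1$ and tracks the invariant that $P_{\kappa_+,\kappa_-}\equiv P+\kappa_+(\left\|\vec y_k\right\|^2-\lambda)-\kappa_-\lambda$ stays in a shifted copy of the same interval, concluding that whichever of $\kappa_+=\ell_1$ or $\kappa_-=\ell-\ell_1$ is reached first, all remaining steps are of the other type, so $\ell_+=\ell_1$ regardless of the interleaving order. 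You never partition: you sandwich $\ell_+$ between two unconditional upper bounds (the last-update inequality with denominator $\left\|\vec y_k\right\|^2$, and threshold-nonnegativity with denominator $\lambda$) and obtain the matching lower bound by telescoping from the last pure shrinking, with the single inequality $(\ell-r_0)\bigl(1-\lambda/\left\|\vec y_k\right\|^2\bigr)\ge 0$ doing the work that the paper's invariant-tracking does; in the regime $\left\|\vec y_k\right\|^2<\lambda$ your monotonicity observation (an update begets an update, since each update shifts the slack by $\left\|\vec y_k\right\|^2-\lambda\le 0$) recovers the paper's ``minus-steps first, then plus-steps'' structure and forces $p_0=0$. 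What each approach buys: the paper's yields the exact plus/minus sequence in every subcase, which makes the non-monotone interleaving you flag fully explicit; yours is shorter, avoids the subinterval bookkeeping entirely, and isolates the crux in one sign check, at the price of not describing the order of the steps. Two small remarks: your last-update bound presumes $\ell_+\ge 1$ (it is trivial otherwise, and your lower bound in fact rules out $\ell_+=0$ whenever $P\le(\ell-1)\lambda$), and the same last-shrinking telescoping actually also covers $\left\|\vec y_k\right\|^2<\lambda$, since $p_0\lambda\ge p_0\left\|\vec y_k\right\|^2>\lambda(r_0-1)-P$, so your separate induction in that regime, while correct, is not needed.
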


\begin{proof}
For the sake of brevity we call a plus-step a presentation of the pattern $\vec y_k$ to the algorithm which leads to a margin error and a minus-step a presentation which does not. If at time $t$ a plus-step takes place $\vec a_{t+1} \cdot \vec y_k-\lambda (t+1)=(\vec a_t \cdot \vec y_k-\lambda t)+(\left \|\vec{y}_{k}\right \|^2-\lambda)$ while if a minus-step takes place $\vec a_{t+1} \cdot \vec y_k-\lambda (t+1)=(\vec a_t \cdot \vec y_k-\lambda t)-\lambda$. Thus, a plus-step adds to $P$ the quantity $\left \|\vec{y}_{k}\right \|^2-\lambda$ while a minus-step the quantity $-\lambda$. Clearly, after $\ell$ consecutive presentations of $\vec y_k$ to the algorithm it holds that $\vec a_{t+\ell} \cdot \vec y_k-\lambda (t+\ell)=P+\ell_+(\left \|\vec{y}_{k}\right \|^2-\lambda)-(\ell-\ell_+)\lambda$.

If $P>(\ell-1)\lambda$ it follows that $P-(\ell-1)\lambda>0$ which means that after $\ell-1$ consecutive minus-steps condition (\ref{cond2}) is still violated and an additional minus-step must take place. Thus, $\ell_-=\ell$ and $\ell_+=0$.

For $P \le (\ell-1)\lambda$ we first treat the subcase $\max \{\left \|\vec{y}_{k}\right \|^2, \lambda \}=\lambda$. If $\left \|\vec{y}_{k}\right \|^2 \le \lambda$ and $P \le 0$ condition (\ref{cond2}) is initially satisfied and will still be satisfied after any number of plus-steps since the quantity $\left \|\vec{y}_{k}\right \|^2-\lambda$ that is added to $P$ with a plus-step is non-positive. Thus, $\ell_+=\ell$. This is in accordance with (\ref{mult}) since $((\ell-1)\lambda-P)/\lambda \ge \ell-1$ or $[((\ell-1)\lambda-P)/\lambda]+1\ge \ell$ leading to $\ell_+=\ell$. It remains for $\left \|\vec{y}_{k}\right \|^2 \le \lambda$ to consider $P$ in the interval $0<P\le (\ell-1)\lambda$ which can be further subdivided as $(\ell_1-1)\lambda < P\le \ell_1\lambda$ with the integer $\ell_1$ satisfying $1\le \ell_1 \le \ell-1$. For $P$ belonging to such a subinterval condition (\ref{cond2}) is initially violated and will still be violated after $\ell_1-1$ minus-steps while after one more minus-step will be satisfied. It will still be satisfied after any number of additional plus-steps because the quantity $\left \|\vec{y}_{k}\right \|^2-\lambda$ that is added to $P$ with a plus-step is non-positive. Thus, $\ell_-=\ell_1$ and $\ell_+=\ell-\ell_1$. This is in accordance with (\ref{mult}) since $(\ell-\ell_1-1)\lambda \le(\ell-1)\lambda -P <(\ell-\ell_1)\lambda$ leads to $[((\ell-1)\lambda -P)/\lambda]+1=\ell-\ell_1$.

The subcase $\left \|\vec{y}_{k}\right \|^2>\lambda$ of the case $P \le (\ell-1)\lambda$ is far more complicated. If $\left \|\vec{y}_{k}\right \|^2>\lambda$ with $P\le -(\ell-1)(\left \|\vec{y}_{k}\right \|^2-\lambda)$ condition (\ref{cond2}) is initially satisfied and will still be satisfied after $\ell-1$ plus-steps since $P+(\ell-1)(\left \|\vec{y}_{k}\right \|^2-\lambda)\le 0$. Thus, $\ell_+=\ell$. This is consistent with (\ref{mult}) because $(\ell-1)\lambda-P\ge (\ell-1)\left \|\vec{y}_{k}\right \|^2$ or $[((\ell-1)\lambda-P)/\left \|\vec{y}_{k}\right \|^2]+1 \ge \ell$ leading to $\ell_+=\ell$. It remains to be examined the case $\left \|\vec{y}_{k}\right \|^2>\lambda$ with $P$ in the interval $-(\ell-1)(\left \|\vec{y}_{k}\right \|^2-\lambda)<P \le (\ell-1)\lambda$. The above interval can be expressed as a union of subintervals $(\ell-\ell_1-1)\lambda-\ell_1(\left \|\vec{y}_{k}\right \|^2-\lambda) <P \le (\ell-\ell_1)\lambda-(\ell_1-1)(\left \|\vec{y}_{k}\right \|^2-\lambda)$ with the integer $\ell_1$ satisfying $1\le \ell_1 \le \ell-1$. Let $P$ belong to such a subinterval. Let us also assume that the pattern $\vec{y}_{k}$ has been presented $\kappa \le \ell$ consecutive times to the algorithm as a result of which $\kappa_+$ plus-steps and $\kappa_-$ minus-steps have taken place and the quantity $\kappa_+(\left \|\vec{y}_{k}\right \|^2-\lambda)-\kappa_-\lambda$ has been added to $P$. Then $P_{\kappa_+,\kappa_-} \equiv P+\kappa_+(\left \|\vec{y}_{k}\right \|^2-\lambda)-\kappa_-\lambda$ satisfies $(\ell-\ell_1-1-\kappa_-)\lambda-(\ell_1-\kappa_+)(\left \|\vec{y}_{k}\right \|^2-\lambda) <P_{\kappa_+,\kappa_-} \le (\ell-\ell_1-\kappa_-)\lambda-(\ell_1-1-\kappa_+)(\left \|\vec{y}_{k}\right \|^2-\lambda)$. As $\kappa$ increases either $\kappa_+$ will first reach the value $\ell_1$ with $\kappa_-<\ell-\ell_1$ or $\kappa_-$ will first reach the value $\ell-\ell_1$ with $\kappa_+<\ell_1$. In the former case $0\le(\ell-\ell_1-1-\kappa_-)\lambda<P_{\kappa_+,\kappa_-}$. This means that condition (\ref{cond2}) is violated and will continue being violated until the number of minus-steps becomes equal to $\ell-\ell_1-1$ in which case one more minus-step must take place. Thus, all steps taking place after $\kappa_+$ has reached the value $\ell_1$ are minus-steps. In the latter case $P_{\kappa_+,\kappa_-} \le -(\ell_1-1-\kappa_+)(\left \|\vec{y}_{k}\right \|^2-\lambda)\le 0$. This means that condition (\ref{cond2}) is satisfied and will continue being satisfied until the number of plus-steps becomes equal to $\ell_1-1$ in which case one more plus-step must take place. Thus, all steps taking place after $\kappa_-$ has reached the value $\ell-\ell_1$ are plus-steps. In both cases $\ell_+=\ell_1$. This is again in accordance with (\ref{mult}) because $(\ell_1-1)\left \|\vec{y}_{k}\right \|^2 \le (\ell-1)\lambda-P < \ell_1\left \|\vec{y}_{k}\right \|^2$ or $[((\ell-1)\lambda-P)/\left \|\vec{y}_{k}\right \|^2]+1 = \ell_1$. \qed
\end{proof}

With $\ell_+$ given in Proposition \ref{propm} the update of multiplicity $\ell$ of the weight vector $\vec a_t$ is written formally as 
\begin{equation}
\label{updm}
\vec a_{t+\ell}= \vec a_t+ \ell_+ \vec y_k \enspace.
\end{equation}

\section{Implementation and Experiments}
We implement three types of SGD algorithms\footnote{Sources available at \url{http://users.auth.gr/costapan}} along the lines of the previous section. The first is the plain algorithm with random selection of examples, denoted SGD-r, which terminates when the maximum number $t_{\rm max}$ of steps is reached. Its pseudocode is given in Section 2. The dual variables in this case do not satisfy the box constraints as a result of which relative accuracy cannot be used as stopping criterion. The SGD algorithm with relative accuracy $\epsilon$, the pseudocode of which is also given in Section 2, is denoted SGD-s where s designates that in an epoch each pattern is presented a single time to the algorithm. It terminates when the relative deviation of the primal objective $\cal J$ from the dual Lagrangian ${\cal L}^T$ just falls below $\epsilon$ provided the maximum number $T_{\rm max}$ of full epochs is not exhausted. A variation of this algorithm, denoted SGD-m, replaces in the $T$-th epoch the usual update with the multiple update (\ref{updm}) of multiplicity $\ell=5$ only if $0<T \;{\rm mod}\; 9<5$. For both SGD-s and SGD-m the comparison coefficient takes the value $f=1.2$ unless otherwise explicitly stated.

Algorithms performing SGD on the primal objective are expected to perform better if linear kernels are employed. Therefore the feature space in our experiments will be chosen to be the original instance space. As a consequence, our algorithms should most naturally be compared with linear SVMs. Among them we choose ${\rm SVM}^{\rm perf}$ \footnote{Source (version 2.50) available at \url{http://svmlight.joachims.org}} \cite{Joa06}, the first cutting-plane algorithm for training linear SVMs, the Optimized Cutting Plane Algorithm for SVMs\footnote{Source (version 0.96) available at \url{http://cmp.felk.cvut.cz/~xfrancv/ocas/html}} (OCAS) \cite{FS}, the Dual Coordinate Descent\footnote{Source available at \url{http://www.csie.ntu.edu.tw/~cjlin/liblinear}. We used the slightly faster older liblinear version 1.7 instead of the latest 1.93.} (DCD) algorithm \cite{HCL} and the Margin Perceptron with Unlearning\footnote{Source available at \url{http://users.auth.gr/costapan}} (MPU) \cite{PT2}. We also include in our study Pegasos\footnote{Source available at \url{http://ttic.uchicago.edu/~shai/code}} ($k=1$). Finally, we briefly considered the SvmSgd\footnote{Source (version 2) available at \url{http://leon.bottou.org/projects/sgd}} \cite{Bot} and SGD-QN\footnote{Source available at \url{https://www.hds.utc.fr/~bordesan/dokuwiki/doku.php?id=en:sgdqn}} \cite{Bor} algorithms implemented in single precision.

\begin{table}[t]
\centering
\caption{The number $T$ of complete epochs required in order for the SGD-s algorithm to achieve $(\cal J-{\cal L}^T)/{\cal L}^T$ $\le 10^{-5}$ for $C=0.1$.}
\label{Table1}
\includegraphics[width=0.83\textwidth]{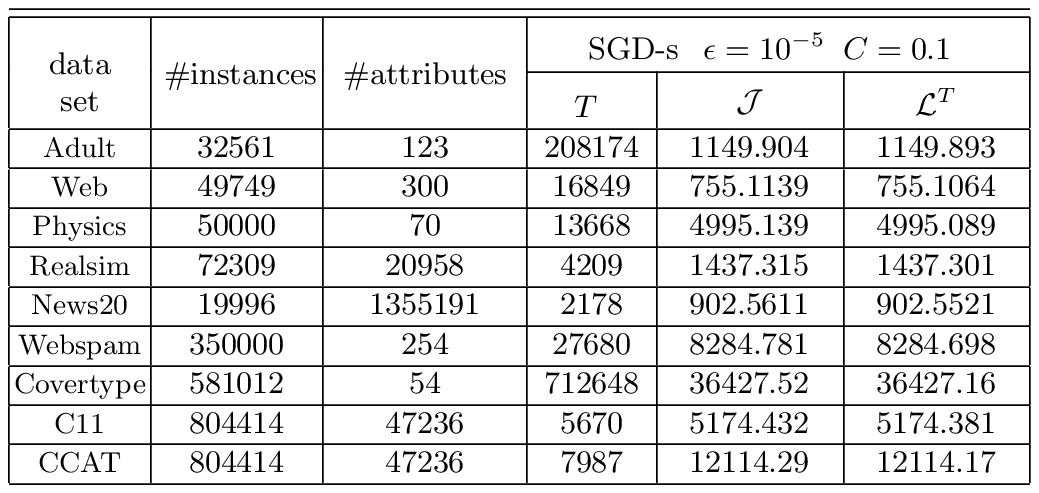}
\end{table}
The datasets we used for training are the binary Adult and Web datasets as compiled by Platt,\footnote{\url{http://research.microsoft.com/en-us/projects/svm/}} the training  set of the KDD04 Physics dataset\footnote{\url{http://osmot.cs.cornell.edu/kddcup/datasets.html}} (with 70 attributes after removing the 8 columns containing missing features), the Real-sim, News20 and Webspam (unigram treatment) datasets,\footnote{\url{http://www.csie.ntu.edu.tw/~cjlin/libsvmtools/datasets}} the multiclass Covertype UCI dataset\footnote{\url{http://archive.ics.uci.edu/ml/datasets.html}} and the full Reuters RCV1 dataset.\footnote{\url{http://www.jmlr.org/papers/volume5/lewis04a/lyrl2004_rcv1v2_README.htm}} Their number of instances and attributes are listed in Table \ref{Table1}. In the case of the Covertype dataset we study the binary classification problem of the first class versus rest while for the RCV1 we consider both the binary text classification tasks of the C11 and CCAT classes versus rest. The Physics and Covertype datasets were rescaled by multiplying all the features with 0.001. The experiments were conducted on a 2.5 GHz Intel Core 2 Duo processor with 3 GB RAM running Windows Vista. The C++ codes were compiled using the g++ compiler under Cygwin.

First we perform an experiment aiming at demonstrating that our SGD algorithms are able to obtain extremely accurate solutions. More specifically, with the algorithm SGD-s employing single updating we attempt to diminish the gap between the primal objective $\cal J$ and the dual Lagrangian ${\cal L}^T$ setting as a goal a relative deviation $(\cal J-{\cal L}^T)/{\cal L}^T$ $\le 10^{-5}$ for $C=0.1$. In the present and in all subsequent experiments we do not include a bias term in any of the algorithms (i.e., in our case we assign to the augmentation parameter the value $\rho =0$). In order to keep the number $T$ of complete epochs as low as possible we increase the comparison coefficient $f$ until the number of epochs required gets stabilized. This procedure does not entail, of course, the shortest training time but this is not our concern in this experiment. In Table \ref{Table1} we give the values of both $\cal J$ and ${\cal L}^T$ and the number $T$ of epochs needed to achieve these values. If multiple updates are used a larger number of epochs is, in general, required due to the slower increase of ${\cal L}^T$. Thus, SGD-s achieves, in general, relative accuracy closer to $\epsilon$ than SGD-m does. This is confirmed by subsequent experiments.

\begin{table}[t]
\centering
\caption{Training times of SGD algorithms to achieve $({\cal J}-{\cal J}_{\rm opt})/{\cal J}_{\rm opt} \le 0.01$ for $C=1$.}
\label{Table2}
\includegraphics[width=0.92\textwidth]{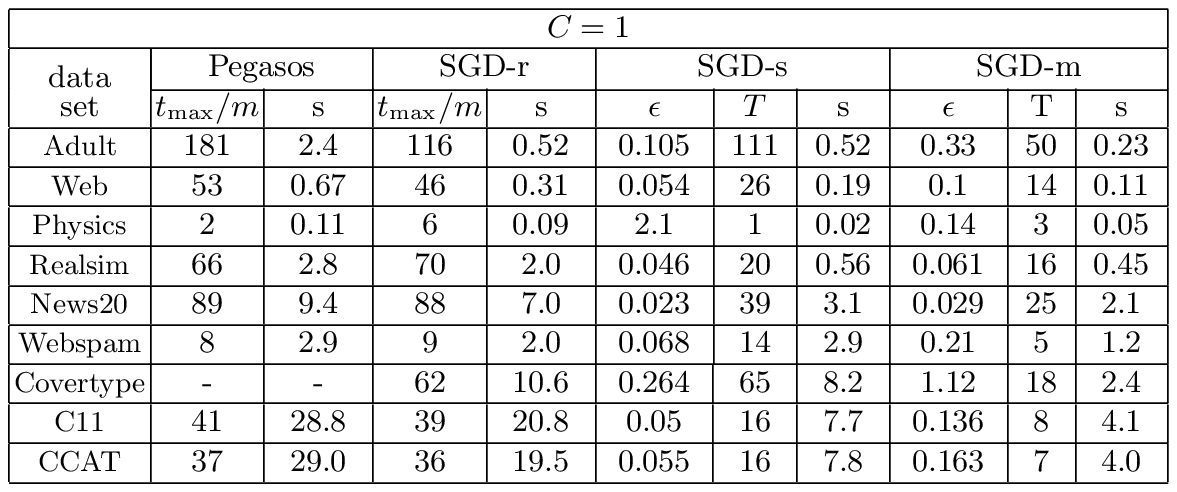}
\end{table}
\begin{table}[t]
\centering
\caption{Training times of linear SVMs to achieve $({\cal J}-{\cal J}_{\rm opt})/{\cal J}_{\rm opt} \le 0.01$ for $C=1$.}
\label{Table3}
\includegraphics[width=0.70\textwidth]{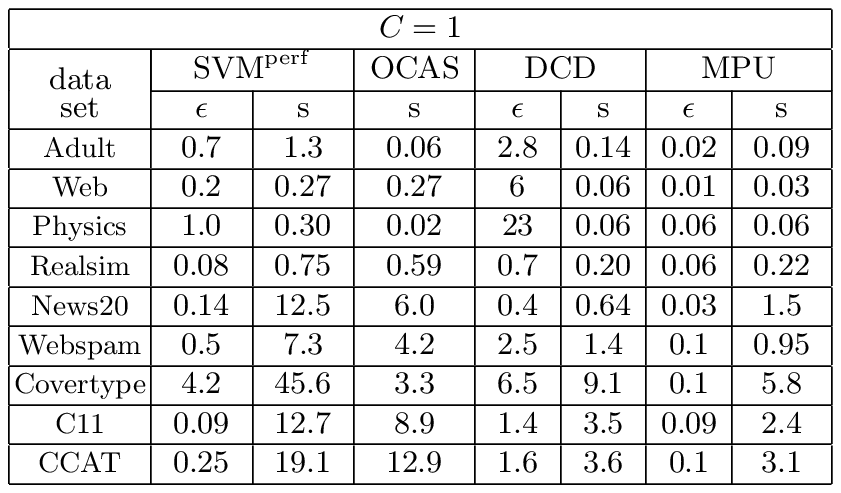}
\vspace{-10pt}
\end{table}
In our comparative experimental investigations we aim at achieving relative accuracy $({\cal J}-{\cal J}_{\rm opt})/{\cal J}_{\rm opt} \le 0.01$ for various values of the penalty parameter $C$ assuming knowledge of the value of ${\cal J}_{\rm opt}$. For Pegasos and SGD-r we use as stopping criterion the exhaustion of the maximum number of steps (iterations) $t_{\rm max}$ which, however, is given values which are multiples of the dataset size $m$. The ratio $t_{\rm max}/m$ may be considered analogous to the number $T$ of epochs of the algorithm SGD-s since equal values of these two quantities indicate identical numbers of ${\vec w}_t$ updates. The input parameter for SGD-s and SGD-m is the (upper bound on) the relative accuracy $\epsilon$. For MPU we use the parameter $\epsilon=\delta=\delta_{\rm stop}$, where $\delta$ is the before-run relative accuracy  and $\delta_{\rm stop}$ the stopping threshold for the after-run relative accuracy. For ${\rm SVM}^{\rm perf}$ and DCD we use as input their parameter $\epsilon$ while for OCAS the primal objective value $q=1.01{\cal J}_{\rm opt}$ (not given in the tables) with the relative tolerance taking the default value $r=0.01$. Any difference in training time between Pegasos and SGD-r for equal values of $t_{\rm max}/m$ should be attributed to the difference in the implementations. Any difference between $t_{\rm max}/m$ for SGD-r and $T$ for SGD-s is to be attributed to the different procedure of choosing the patterns that are presented to the algorithm. Finally, the difference in the number $T$ of epochs between SGD-s and SGD-m reflects the effect of multiple updates. It should be noted that in the runtime of SGD-s and SGD-m several calculations of the primal and the dual objective are included which are required for checking the satisfaction of the stopping criterion. If SGD-s and SGD-m were using the exhaustion of the maximum number $T_{\rm max}$ of epochs as stopping criterion their runtimes would certainly be shorter.

\begin{table}[t]
\centering
\caption{Training times of SGD algorithms to achieve $({\cal J}-{\cal J}_{\rm opt})/{\cal J}_{\rm opt} \le 0.01$ for $C=10$.}
\label{Table4}
\includegraphics[width=0.92\textwidth]{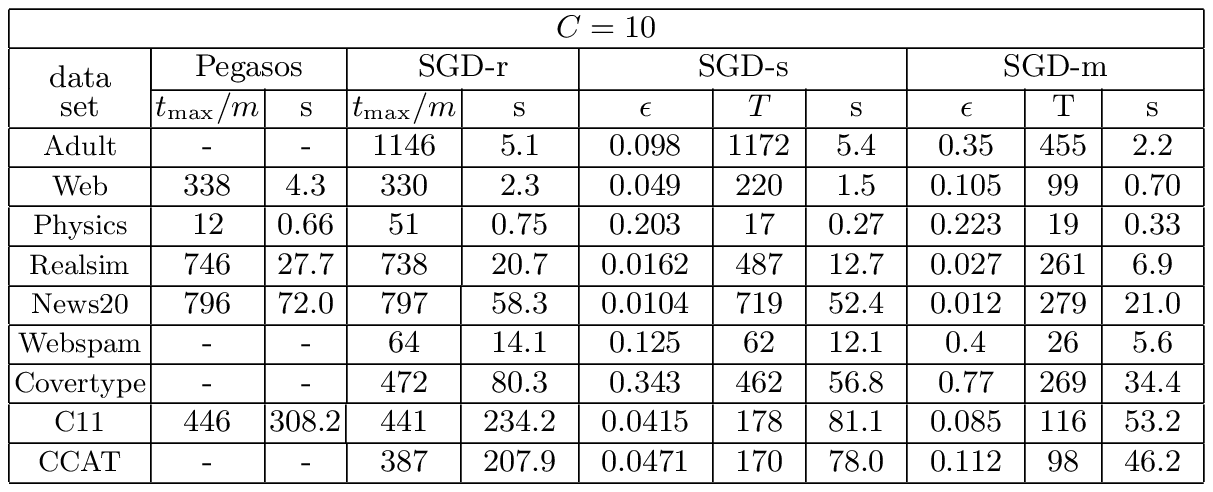}
\end{table}
\begin{table}[t]
\centering
\caption{Training times of linear SVMs to achieve $({\cal J}-{\cal J}_{\rm opt})/{\cal J}_{\rm opt} \le 0.01$ for $C=10$.}
\label{Table5}
\includegraphics[width=0.70\textwidth]{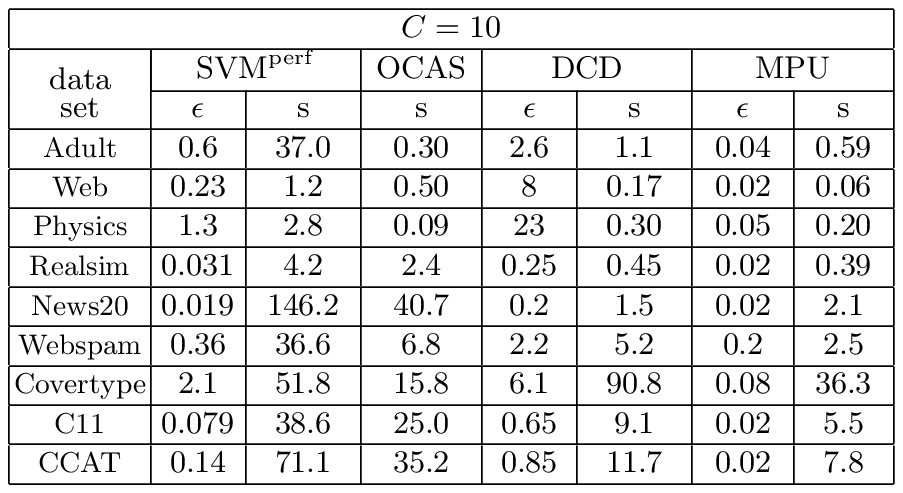}
\vspace{-10pt}
\end{table}
Tables \ref{Table2} and \ref{Table3} contain the results of the experiments involving the SGD algorithms and linear SVMs for $C=1$. We observe that, in general, there is a progressive decrease in training time as we move from Pegasos to SGD-m through SGD-r and SGD-s due to the additive effect of several factors. These factors are the more efficient implementation of our algorithms exploiting the change of variable given by (\ref{var}), the presentation of the patterns to SGD-s and SGD-m in complete epochs (see also \cite{Bot,SSS1}) and the use by SGD-m of multiple updating. The overall improvement made by SGD-m over Pegasos is quite substantial. DCD and MPU are certainly statistically faster but their differences from SGD-m are not very large especially for the largest datasets. Moreover, SGD-s and SGD-m are considerably faster than ${\rm SVM}^{\rm perf}$ and statistically faster than OCAS. Pegasos failed to process the Covertype dataset due to numerical problems. 

\begin{table}[t]
\centering
\caption{Training times of SGD algorithms to achieve $({\cal J}-{\cal J}_{\rm opt})/{\cal J}_{\rm opt} \le 0.01$ for  $C=0.05$.}
\label{Table6}
\includegraphics[width=0.92\textwidth]{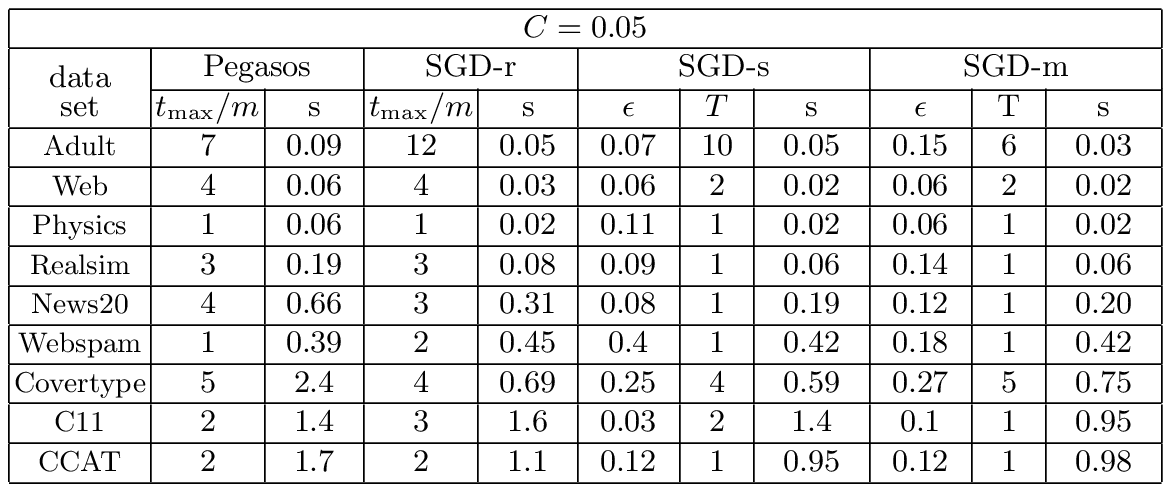}
\end{table}
\begin{table}[t]
\centering
\caption{Training times of linear SVMs to achieve $({\cal J}-{\cal J}_{\rm opt})/{\cal J}_{\rm opt} \le 0.01$ for $C=0.05$.}
\label{Table7}
\includegraphics[width=0.70\textwidth]{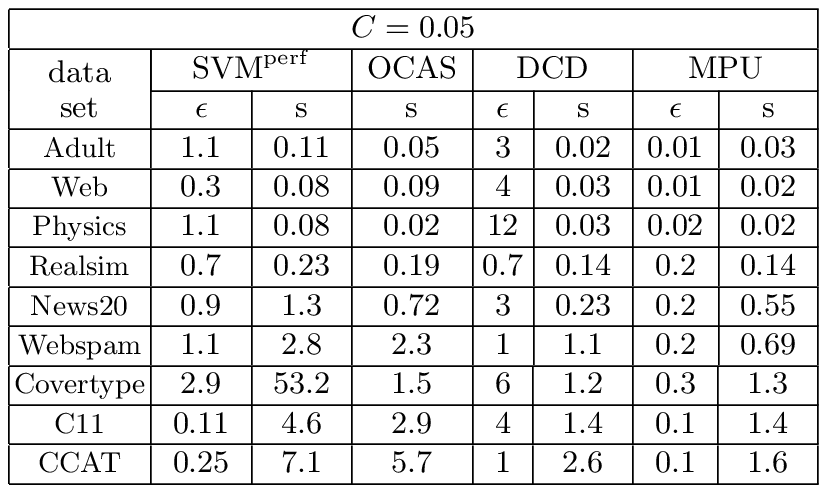}
\vspace{-10pt}
\end{table}
Tables \ref{Table4} and \ref{Table5} contain the results of the experiments involving the SGD algorithms and linear SVMs for $C=10$. Although the general characteristics resemble the ones of the previous case the differences are magnified due to the intensity of the optimization task. Certainly, the training time of linear SVMs scales much better as $C$ increases. Moreover, MPU clearly outperforms DCD and OCAS for most datasets. SGD-m is still statistically faster than ${\rm SVM}^{\rm perf}$ but slower than OCAS. Finally, Pegasos runs more often into numerical problems. 

In contrast, as $C$ decreases the differences among the algorithms are alleviated. This is apparent from the results for $C=0.05$ reported in Tables \ref{Table6} and \ref{Table7}. SGD-r, SGD-s and SGD-m all appear statistically faster than the linear SVMs. Also Pegasos outperforms ${\rm SVM}^{\rm perf}$ for all datasets and OCAS for the majority of them. Seemingly, lowering $C$ favors the SGD algorithms.

\begin{table}[t]
\centering
\caption{Training times of the algorithms SvmSgd, SGD-QN, SGD-m and MPU implemented in single precision to achieve $({\cal J}-{\cal J}_{\rm opt})/{\cal J}_{\rm opt} \le 0.01$ for $C=1$.}
\label{Table8}
\includegraphics[width=0.85\textwidth]{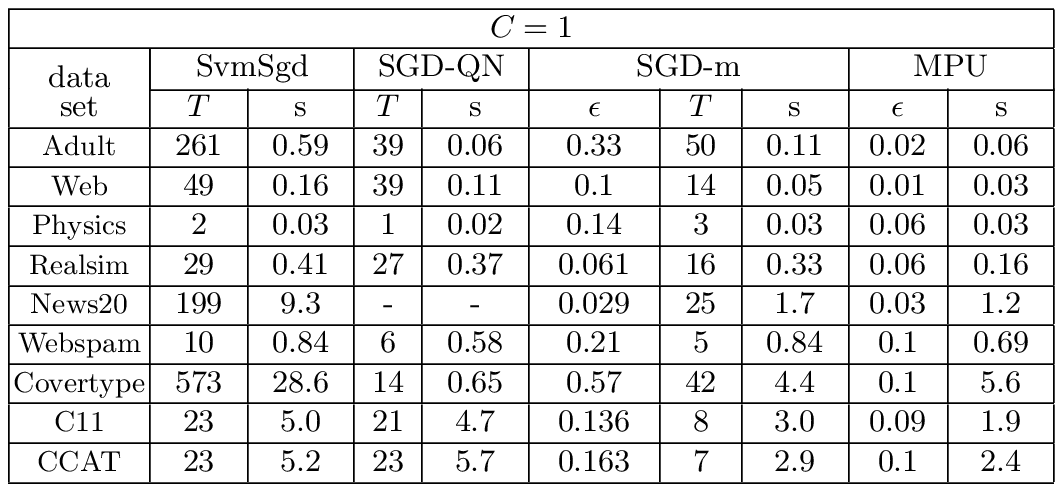}
\vspace{-10pt}
\end{table}
Before concluding our experimental investigation we also consider the SGD algorithms SvmSgd and SGD-QN both implemented in single precision. For a fair comparison we implemented the algorithms SGD-m and MPU in single precision as well. SvmSgd and SGD-QN do not perform random permutations of the dataset but rather assume that it has already been shuffled. Thus, we provided them with the dataset produced by SGD-m as a result of the first random permutation of the dataset given. The fact that the computational cost of the random permutation is not included in the runtime of SvmSgd and SGD-QN gives these algorithms a certain advantage which becomes more crucial for tasks requiring short runtimes. For this reason we did not consider values of the penalty parameter $C$ much smaller than 1. Table \ref{Table8} contains the results of the experiments involving the algorithms SvmSgd, SGD-QN, SGD-m and MPU for $C=1$. We observe that statistically SvmSgd is the slowest and MPU the fastest. Moreover, SGD-m statistically outperforms SGD-QN with a preference for sparse multidimentional datasets.  In the case of the News20 dataset SGD-QN failed to reach the required relative accuracy.

\section{Conclusions}
We reexamined the classical SGD approach to the primal unconstrained L1-SVM optimization task and made some contributions concerning both theoretical and practical issues. Assuming a learning rate inversely proportional to the number of steps a simple change of variable allowed us to simplify the algorithmic description and demonstrate that in a scheme presenting the patterns to the algorithm in complete epochs the naturally defined dual variables satisfy automatically the box constraints of the dual optimization. This opened the way to obtaining an estimate of the progress made in the optimization process and enabled the adoption of a meaningful stopping criterion, something the SGD algorithms were lacking. Moreover, it made possible a qualitative discussion of how the KKT conditions will be asymptotically satisfied provided the weight vector $\vec{w}_t$ gets stabilized. Besides, we showed that in the limit ${t \to \infty}$ even without a projection step in the update it holds that $\left\|\vec{w}_t\right\| \le 1/\sqrt{\lambda}$, a bound known to be obeyed by the optimal solution. On the more practical side by exploiting our simplified algorithmic description and employing a mechanism of multiple updating we succeeded in substantially improving the performance of SGD algorithms. For optimization tasks of low or medium intensity the algorithms constructed are comparable to or even faster than the state-of-the-art linear SVMs.    

\newpage

\end{document}